\documentclass{article}

\usepackage{arxiv}

\usepackage{amssymb,amsthm,amsmath, amsfonts}
\usepackage{xcolor,paralist,hyperref,titlesec,fancyhdr,etoolbox, url}
\usepackage{times}
\usepackage{helvet}
\usepackage{courier} 
\usepackage{microtype}
\usepackage{graphicx}
\usepackage{lipsum}
\usepackage{subcaption}
\usepackage{booktabs}
\usepackage{mathtools}
\usepackage{natbib}

\usepackage{enumitem}
\usepackage{dsfont}

\theoremstyle{plain}
\newtheorem{theorem}{Theorem}[section]

\newtheorem{lemma}[theorem]{Lemma}

\theoremstyle{definition}
\newtheorem{definition}[theorem]{Definition}

\theoremstyle{remark}

\newtheorem{parameterization}{Parameterization}

\title{Identifying Adversary Characteristics from an Observed Attack}

\author{
 Soyon Choi \\
  Vanderbilt University\\
  Nashville, TN 37240 \\
  \texttt{soyon.choi@vanderbilt.edu} \\
   \And
 Scott Alfeld \\
  Amherst College\\
  Amherst, MA 01002 \\
  \texttt{salfeld@amherst.edu} \\
  \And
 Meiyi Ma \\
  Vanderbilt University\\
  Nashville, TN 37240 \\
  \texttt{meiyi.ma@vanderbilt.edu} \\
}

\begin{document}

\twocolumn[
\maketitle
]

\newcommand{\attacker}{\texttt{ATKR}}
\newcommand{\defender}{\texttt{DFDR}}

\newcommand{\knowledge}{\mathcal{K}}
\newcommand{\capability}{\mathcal{C}}
\newcommand{\objective}{\mathcal{O}}

\newcommand{\bx}{\pmb{x}}
\newcommand{\bX}{\pmb{X}}
\newcommand{\by}{\pmb{y}}
\newcommand{\bz}{\pmb{z}}
\newcommand{\bp}{\pmb{p}}

\newcommand{\model}{\pmb{M}}
\newcommand{\bW}{\pmb{W}}
\newcommand{\bC}{\pmb{C}}
\newcommand{\bV}{\pmb{V}}
\newcommand{\bG}{\pmb{G}}
\newcommand{\balpha}{\pmb{\alpha}}

\newcommand{\bP}{\pmb{P}}
\newcommand{\bq}{\pmb{q}}
\newcommand{\br}{\pmb{r}}

\newcommand{\bA}{\pmb{A}}
\newcommand{\bv}{\pmb{v}}
\newcommand{\bs}{\pmb{s}}

\newcommand{\mean}{\mu}
\newcommand{\bmean}{\pmb{\mu}}
\newcommand{\bSig}{\pmb{\Sigma}}
\newcommand{\bPsi}{\pmb{\Psi}}
\newcommand{\std}{\sigma}

\newcommand{\tr}{\text{Tr}}
\newcommand{\bw}{\pmb{w}}
\newcommand{\bc}{\pmb{c}}

\newcommand{\identity}{\pmb{\mathcal{I}}}

\begin{abstract}

When used in automated decision-making systems, machine learning (ML) models are vulnerable to data-manipulation attacks.
Some defense mechanisms (e.g., adversarial regularization) directly affect the ML models while others (e.g., anomaly detection) act within the broader system.
In this paper we consider a different task for defending the adversary, focusing on the attack{\it er}, rather than the attack.
We present and demonstrate a framework for identifying characteristics about the attacker from an observed attack.

We prove that, without additional knowledge, the attacker is non-identifiable (multiple potential attackers would perform the same observed attack).
To address this challenge, we propose a domain-agnostic framework to identify the most probable attacker.
This framework aids the defender in two ways.
First, knowledge about the attacker can be leveraged for exogenous mitigation (i.e., addressing the vulnerability by altering the decision-making system outside the learning algorithm and/or limiting the attacker's capability).
Second, when implementing defense methods that directly affect the learning process (e.g., adversarial regularization), knowledge of the specific attacker improves performance.
We present the details of our framework and illustrate its applicability through specific instantiations on a variety of learners.

\end{abstract}

\section{Introduction}

Despite their capabilities, machine learning (ML) systems are vulnerable to adversarial attacks --- imperceptible perturbations to input data that cause the model to predict incorrectly \cite{szegedy2014intriguingpropertiesneuralnetworks, DBLP:journals/corr/Moosavi-Dezfooli15, dabouei2019smoothfoolefficientframeworkcomputing}.
To counter known attack strategies, many methods of defense have been proposed \cite{pmlr-v80-athalye18a, pmlr-v97-cohen19c}. 
However, these defenses assume a specific attacker, and are often easily circumvented by more sophisticated attackers, leading to a constant arms race between new attack strategies and new defense strategies \cite{NEURIPS2020_11f38f8e}.

The foundational weakness of many existing defenses is that they typically operate under fixed threat models that assume the attacker’s parameters, including their level of knowledge, capabilities/limitations, and objectives.
While this is analytically convenient, such assumptions rarely reflect the behavior of real adversaries, whose parameters are non-stationary and/or unknown.

This motivates two distinct but complementary goals.

First, we want to better understand the attacker itself. 
In real-life ML applications, learning about the knowledge, objectives, and capabilities of the attacker provides insight into the nature of the overall threat.
The importance of “knowing one’s enemy” has long been recognized in adversarial settings \cite{tzu512art}, and in the ML context, such knowledge can inform risk assessment, attack detection, and robust design of future systems.
Hence, this is valuable even independent of any particular defense mechanism.

Second, the knowledge of the attacker can be used toward a tailored defense.
In practice, the attacker's beliefs, objective, and capabilities are almost never known by the defender; the only information that is reliably known by the defender is the actual input to the defender's model.
Hence, rather than assuming fixed attacker parameters, the defender should infer them from the attacks they observe and adapt their defensive strategy accordingly.
For example, the defender can apply adversarial regularization under the attack parameters inferred from the observed attacks.

In this paper, we propose a novel, domain-agnostic framework for reverse engineering the goals and parameters of the attacker. 
Within our novel framework, we formulate the defender's task as a reverse optimization problem that outputs the most likely attacker parameters given an observed attack and a prior belief distribution. 
We then provide a simple proof-of-concept of the feasibility of our framework.

Concretely, we make the following contributions:
\begin{enumerate}[noitemsep,nolistsep]
\item We introduce a general framework and method for reverse-engineering aspects of an attacker from their attack.
\item We mathematically illustrate the limitations of the defender by showing that, in general, the aspects of an attacker are non-identifiable.
\end{enumerate}

\subsection{Mathematical Notation}

Throughout this paper, we use bold uppercase to denote matrices, bold lowercase to denote vectors, and plain lowercase to denote scalars. 
Special characters on recurring variables are used to differentiate between steps of the pipeline. 
We decorate with hat (\(\hat\cdot\)) to denote a learned variable and star (\(\cdot^*\)) to denote its ground-truth value.
For instance, consider a variable $\bA$. 
$\hat\bA$ is the learned $\bA$, i.e., the result of an optimization problem over $\bA$.
$\bA^{*}$ is the true value of $\bA$. 

All variables used are defined within the paper, and a full notation table is provided for reference in Appendix A.

\section{Threat Model}

We consider two players, the defender/learner \defender{} and the attacker/adversary \attacker{}.
\attacker{} performs a deployment-time data-manipulation attack against \defender{}.
 \defender{}'s goal is to learn about \attacker{} from the observed attack.

Based on the classical taxonomy widely used in adversarial machine learning \cite{10.1145/1128817.1128824, 10.1007/s10994-010-5188-5, 10.1145/2046684.2046692}, we model \attacker{} as having three components:
\begin{enumerate}[noitemsep,nolistsep]
    \item \(\knowledge\): Their knowledge about \defender{}.
    What does \attacker{} think \defender{}'s prediction function is?
  
    \item \(\capability\): Their capability. 
    What perturbations to the data can \attacker{} make?

    \item \(\objective\): Their objective.
    What function does \attacker{} try to optimize?
\end{enumerate}
For ease of notation, we assume the knowledge, capability, and objective of \attacker{} are parameterized, and $\knowledge, \capability, \objective$ denote the \textit{sets of parameters} that pertain to 
\attacker{}'s knowledge, capability, and objective, respectively.

\attacker{} observes the original input $\bx$ and aims to perform the optimal attack $\balpha_{opt}$ given their values of $(\knowledge, \capability, \objective)$.
We denote the optimal attack as a function of \attacker{}'s parameters (\(    \balpha_{opt}(\knowledge, \capability, \objective)\)). 

We assume that \defender{} knows the parameterized form of \attacker{}.
The task of \defender{} is to, after observing \(\balpha_{obs}\), reverse engineer \(\knowledge, \capability, \objective\).
This is achieved through our framework, which is introduced in Section 3. 
\textbf{Figure \ref{fig:framework_overview}} shows the overview of the \attacker{}-\defender{} system, including the role of our framework within it. 

We focus on deployment-time (as opposed to training-time) attackers of two forms--those that perform ``attractive'' attacks (aiming to pull the prediction toward their particular goal) and ``repulsive'' attacks (aiming to push the prediction as far as possible from the original prediction).
In the following subsections, we explore three different possible \attacker{}-\defender{} configurations and their respective \attacker{} parameterizations:
\begin{enumerate}[noitemsep,nolistsep]
    \item Linear Regressor \defender{} vs. Repulsive \attacker{}
    \item Logistic Regressor \defender{} vs. Attractive \attacker{}
    \item MLP \defender{} vs. Attractive \attacker{} 
\end{enumerate}

These three examples result in qualitatively different mathematical forms of increasing difficulty, which we discuss in more detail in the next section.
We note that the selected models as well as \attacker/\defender{} parameterizations presented in this section are meant as illustrative examples.
Our framework is domain-agnostic and freely admits other attackers, defenders, and parameterizations thereof.

\subsection{Linear Regression}

Suppose \defender{} is a generalized multi-target linear regressor with the prediction function $f: \mathbb{R}^d \rightarrow \mathbb{R}^q$:
\begin{align}
f(\bx) &= \model^* \bx
\end{align}
\noindent where $\model^* \in \mathbb{R}^{q \times d}$ is the weight matrix of the model.

Suppose \attacker{} performs a \textit{repulsive attack under Mahalanobis constraints}.
That is, \attacker{} aims to perturb the input within its capabilities such that \defender{}'s prediction is as wrong as possible.

\attacker{} perturbs a test point $\bx$ by adding a perturbation vector $\balpha$.
\attacker{} aims to solve the following optimization problem: 
\begin{align}
    \balpha_{opt}(\knowledge, \capability, \objective) &= \arg \max_{\balpha} \Vert \model (\bx + \balpha) - \model \bx \Vert_{\bW}^2 \\
    &= \arg\max_{\balpha} \Vert \model \balpha \Vert_{\bW}^2 \\
  &\text{s.t.~} ||\balpha||_{\bC} \leq c
\end{align}

where
\begin{itemize}[noitemsep,nolistsep]

\item \(\model \in \mathbb{R}^{q\times d}\) is \attacker{}'s estimate of \defender{}'s model $\model^*$.

\item \(\bW \in \mathbb{R}^{q\times q}\) is a positive definite matrix defining the Mahalanobis loss of \attacker.

\item \(\bC \in \mathbb{R}^{d\times d}\) is a positive definite matrix defining (with positive scalar \(c\)) the Mahalanobis constraint of \attacker.

\end{itemize}

In this case, we parameterize the $\knowledge$, $\capability$, and $\objective$ of \attacker{} as follows:
\begin{parameterization}
    \label{param:1}
    \begin{align}
        \knowledge &= \{\model\},  \capability = \{\bC, c\}, \objective = \{\bW\}
    \end{align}
\end{parameterization}

\subsection{Logistic Regression}

\begin{figure}[t!]
    \centering
    \includegraphics[width=\linewidth]{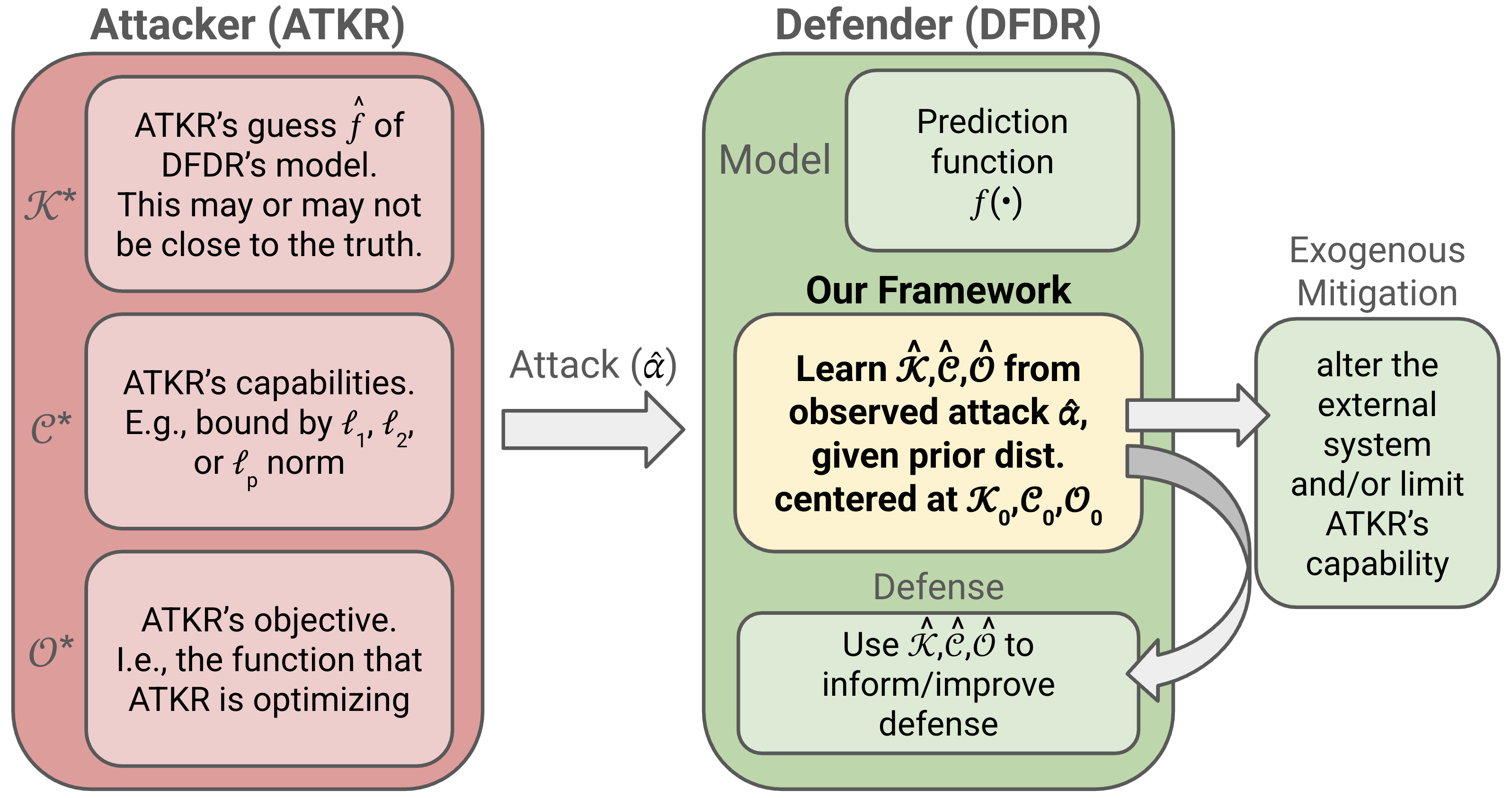}
    \caption{Schematic overview of our framework within the overall attacker-defender system. 
    In this paper, we consider the example cases where $f$ is linear regression, logistic regression, or a multi-layer perceptron.}
    \label{fig:framework_overview}
\end{figure}

Suppose \defender{} is a generalized multi-class logistic regressor.
For each class $i \in \{ 1, \dots, q\}$, the model outputs the probability that the input vector $\bx$ is of class $i$:
\begin{align}
    z_{i}(\bx) &= \text{softmax}(\model^{*}\bx) \\
    &= \frac{e^{[\model^{*}\bx]_{i}}}{\sum_{i=1}^{K} e^{[\model^{*}\bx]_{i}}}
\end{align}
We denote this logits vector as follows:
\begin{equation}
    \bz = \{ z_{1}(\bx), z_{2}(\bx), \dots, z_{q}(\bx)\} 
\end{equation}
To make a classification, the defender takes the maximum of the probabilities.
\begin{align}
    f(\bx) = \arg \max_{i\in \{1,\dots,q\}} z_{i}(\bx)
\end{align}
Suppose \attacker{} performs an \textit{attractive attack under box constraints}.
That is, \attacker{} aims to perturb the input within its capabilities such that it's target class $y^{\dagger} \in \{1,\dots,q\}$ is as likely as possible. 

\attacker{} aims to solve the following optimization problem:
\begin{align}
    \balpha_{opt}(\knowledge, \capability, \objective) &= \arg\max_{\balpha} z_{y^{\dagger}}(\bx + \balpha) \\
    &\text{s.t.~} \bc_1 \preceq \balpha \preceq \bc_2
\end{align}
\noindent where 
\begin{itemize}[noitemsep,nolistsep]
    \item $z_{y^{\dagger}}(\cdot)$ is the probability that the input is of the target class $y^{\dagger}$.
    \item $\bc_1,\bc_2,  \in \mathbb{R}^{d}$ are the vectors of lower and upper box constraints on each input dimension.
\end{itemize}

In this case, we parameterize the $\knowledge$, $\capability$, and $\objective$ of \attacker{} as follows:
\begin{parameterization}
    \label{param:2}
    \begin{align}
        \knowledge &= \{\model\},
        \capability = \{ \bc_1, \bc_2\},
        \objective = \{y^{\dagger}\}
    \end{align}
\end{parameterization}

\subsection{Artificial Neural Networks}

Suppose \defender{} is a multi-layer perceptron (MLP) with $m$ hidden layers.
The MLP is a well-understood, simple deep learning (DL) method.
As the most basic form of an artificial neural network (ANN), we use MLP as a foundation for demonstrating the applicability of our framework on DL methods without confounding architectural complexities.

Let $h_0(\bx) = \bx$ be the initial input to the MLP. 
Then, at each hidden layer $\ell \in \{1, ... ,m\}$, 
\begin{align}
    h_{\ell}(\bx) = \sigma_{\ell}(\bW_{\ell} h_{\ell-1}(\bx))
\end{align}
Let $q_{\ell}$ denote the number of neurons in hidden layer $\ell$. 
Then, 
\begin{itemize}[noitemsep,nolistsep]
    \item $\bW_{\ell} \in \mathbb{R}^{q_{\ell} \times q_{\ell-1}}$ is the matrix of weights in layer $\ell$.
    \item $\sigma_{\ell}$ is the activation function being applied at layer $\ell$.
    \item $\model \coloneq \left(\{\bW_1, \bW_2, \dots, \bW_m\}, \{\sigma_1, \sigma_2, \dots, \sigma_m\}\right)$
\end{itemize}

Suppose \attacker{} performs an \textit{attractive attacker under box constraints}.
$\attacker$ aims to solve the following optimization problem: 
\begin{align}
    \balpha_{opt}(\knowledge, \capability, \objective) &= \arg\max_{\balpha} [f_{\model}(\bx + \balpha)]_{\dagger} \\
    &\text{s.t.~} \bc_1 \leq \balpha \leq \bc_2
\end{align}
We parameterize the $\knowledge$, $\capability$, and $\objective$ of $\attacker$ as follows:
\begin{parameterization}
    \label{param:3}
    \begin{align}
        \knowledge &= \model \coloneq \left(\{\bW_1, \bW_2, \dots, \bW_m\}, \{\sigma_1, \sigma_2, \dots, \sigma_m\}\right), \nonumber \\
        \capability &= \{\bc_1, \bc_2\},
        \objective = \{y^{\dagger}\}
    \end{align}
\end{parameterization}

{\bf Summary.}
Throughout this paper, we consider three example \attacker{} parameterizations \((\knowledge, \capability, \objective)\) that correspond to three possible \attacker{}-\defender{} configurations.
Across these parameterizations, $\knowledge$ differs in the functional form of the underlying predictive model, ranging from a linear weight matrix to more expressive compositions of matrices and non-linear functions.
The $\capability$ parameter captures \attacker{}’s feasible perturbation set, such as $\ell_{\infty}$ (box) constraints or generalized $\ell_{2}$ (Mahalanobis) constraints. 
Lastly, $\objective$ encodes \attacker{}’s goal, modeling either a repulsive \attacker{} maximizing generalized regression loss, or an attractive \attacker{} maximizing the likelihood of a target class.

Note that we assume that \defender{} knows the parameterized form of the attacker, but not the specific parameter values, which are be inferred from the observed attack using our framework.

\section{Framework}

Suppose \defender{} has no prior knowledge (other than the parameterization) of \attacker's knowledge ($\knowledge$), capability ($\capability$)\footnote{Except that the observed attack is clearly something \attacker{} is capable of.}, or objective ($\objective$). 
\defender's goal is to find values for \(\knowledge, \capability, \objective\) such that the observed attack is optimal for an attacker defined by those values.

In principle, this problem might seem impossible to solve since there exist many tuples \((\knowledge, \capability, \objective)\) that would yield the same optimal attack \(\balpha_{opt}(\knowledge, \capability, \objective)\).

The most canonical question is: is the attacker identifiable? 
That is, does the observe attack uniquely determine the parameters of \attacker?
Intuitively (and as we show mathematically below) the answer is, in general, no---two different attackers (e.g., suppose both their objectives and capabilities differ) may end up performing the same attack.
While one can easily construct particular examples to show instances of this phenomenon, in what follows we show a much stronger result for linear attackers.
That is, for {\it any} attack $\balpha$, there is set of attackers that are indistinguishable without additional information.
We define the equivalence class $\mathcal{S}_{\balpha}$ of attackers for an observed attack \(\balpha\) as follows:
\begin{definition} 
    Let \attacker{} be defined by parameters \((\knowledge, \capability, \objective)\). 
    The optimal attack for \attacker{} is denoted $\balpha_{opt}(\knowledge, \capability, \objective)$. We define the set \(  \mathcal{S}_{\balpha}\) as:
    \begin{equation}
        \attacker \in \mathcal{S}_{\balpha} ~ \textbf{iff} ~ \balpha = \balpha_{opt}(\knowledge, \capability, \objective)
    \end{equation}
\end{definition}

\begin{theorem}
    In the case of a linear \defender{} with \attacker{} Parameterization \ref{param:1}, the following hold for any attack $\balpha$:
    \begin{itemize}[noitemsep,nolistsep]
        \item     $\forall \knowledge, \capability,~ \exists \objective ~ \text{s.t.} ~ \attacker \in \mathcal{S}_{\balpha}$
        \item $\forall \knowledge, \objective,~ \exists \capability ~ \text{s.t.} ~ \attacker \in \mathcal{S}_{\balpha}$
        \item     $\forall \capability, \objective,~ \exists \knowledge ~ \text{s.t.} ~ \attacker \in \mathcal{S}_{\balpha}$
    \end{itemize}
\end{theorem}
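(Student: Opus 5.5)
The optimal attack is governed by a generalized Rayleigh quotient. Write $\bG \coloneq \model^\top \bW \model$, which is positive semidefinite. \attacker{} maximizes $\balpha^\top \bG \balpha$ subject to $\balpha^\top \bC \balpha \le c^2$. (I read $\Vert\cdot\Vert_{\bC}\le c$ as a bound on the $\bC$-norm; the squared convention only rescales $c$.)

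The objective is a homogeneous convex quadratic, so whenever $\bG\neq 0$ the maximum sits on the boundary $\Vert\balpha\Vert_{\bC}=c$. The maximizers are exactly the boundary points $\balpha$ that maximize $\frac{\balpha^\top\bG\balpha}{\balpha^\top\bC\balpha}$, i.e.\ top generalized eigenvectors of the pencil $(\bG,\bC)$. Because of the symmetry $\balpha\mapsto-\balpha$ the argmax is never a singleton, so I interpret $\balpha = \balpha_{opt}$ as $\balpha\in\arg\max$.

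Two nondegeneracy conditions are forced by this geometry, and I will state them as standing assumptions. First, $\model\balpha\neq 0$, i.e.\ the attack does something under $\model$. Second, when $\capability$ is held fixed, $\Vert\balpha\Vert_{\bC}=c$; an interior attack cannot be optimal for any nonzero $\bG$. Each bullet then reduces to building one positive definite matrix so that $\balpha$ is a top generalized eigenvector.

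\textbf{Second bullet ($\knowledge,\objective$ fixed, choose $\capability$).} Let $\bG=\model^\top\bW\model$ and let $\bP = \identity - \balpha\balpha^\top/\Vert\balpha\Vert^2$ be the projector onto $\balpha^\perp$. Set
\begin{align*}
\bC = \bG + \bP, \qquad c = \Vert\balpha\Vert_{\bC}.
\end{align*}

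$\bC$ is positive definite. If $\bP\bv\neq 0$, then $\bv^\top\bC\bv \ge \Vert\bP\bv\Vert^2>0$. Otherwise $\bv\propto\balpha$, and $\balpha^\top\bG\balpha = \Vert\model\balpha\Vert_{\bW}^2>0$ by the first standing assumption.

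The Rayleigh quotient satisfies
\begin{align*}
\frac{\bv^\top\bG\bv}{\bv^\top\bG\bv+\Vert\bP\bv\Vert^2}\le 1,
\end{align*}
with equality iff $\bP\bv=0$, i.e.\ $\bv\propto\balpha$. Hence $\pm\balpha$ are the maximizers on the boundary.

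\textbf{Third bullet ($\capability,\objective$ fixed, choose $\knowledge$).} Take a rank-one model $\model = \pmb{e}\,(\bC\balpha)^\top$ for any nonzero $\pmb{e}\in\mathbb{R}^q$. Then $\bG = (\pmb{e}^\top\bW\pmb{e})\,\bC\balpha\balpha^\top\bC$. The objective becomes proportional to $(\balpha^\top\bC\bv)^2$. By Cauchy--Schwarz in the $\bC$-inner product, over $\Vert\bv\Vert_{\bC}=c$ this is maximized exactly at $\bv=\pm\balpha$, using the standing assumption $\Vert\balpha\Vert_{\bC}=c$. This works for every $q\ge 1$.

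\textbf{First bullet ($\knowledge,\capability$ fixed, choose $\objective$).} This is the main obstacle, because $\bW$ only acts through the range of $\model$ and must be positive definite. My plan is to aim for
\begin{align*}
\model^\top\bW\model = \bS, \qquad \bS \coloneq \bC + \bC\balpha\balpha^\top\bC.
\end{align*}
With this $\bS$, the quotient $\bv^\top\bS\bv/\bv^\top\bC\bv = 1 + (\balpha^\top\bC\bv)^2/\bv^\top\bC\bv$ is maximized at $\bv\propto\balpha$ by the same Cauchy--Schwarz argument.

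When $\model$ has full column rank, I will set
\begin{align*}
\bW = \model(\model^\top\model)^{-1}\bS(\model^\top\model)^{-1}\model^\top + (\identity-\model(\model^\top\model)^{-1}\model^\top).
\end{align*}
This is positive definite: it is positive definite on the range of $\model$ and equals the identity on the orthogonal complement. It also gives $\model^\top\bW\model=\bS$. (For square invertible $\model$ this is just $\bW=\model^{-\top}\bS\model^{-1}$.)

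For rank-deficient $\model$, I would first try to keep the rank-one part, taking $\bW = \pmb{w}\pmb{w}^\top + \epsilon\identity$ with $\model^\top\pmb{w}=\bC\balpha$. This needs $\bC\balpha\in\mathrm{row}(\model)$. I would then check whether $\epsilon>0$ can be absorbed, for instance by replacing $\epsilon\identity$ with a positive definite term whose pullback $\model^\top(\cdot)\model$ annihilates $\balpha$. I expect that some such compatibility condition between $\model$, $\bC$ and $\balpha$ will have to be added to the hypotheses of this bullet. Identifying exactly which condition is needed is the part I am least sure of.
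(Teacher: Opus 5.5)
Your argument is correct under the standing hypotheses you add, and those hypotheses are forced by the problem rather than by your method. The statement as written fails without them: bullet 2 is impossible when $\model\balpha=0\neq\model$, and bullets 1 and 3 fail outright when $\Vert\balpha\Vert_{\bC}>c$.

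The paper works from the closed form $\balpha_{opt}=c\bG^{-1}\bs_1$. Here $\bW=\bV^\top\bV$ and $\bC=\bG^\top\bG$, so your use of $\bG$ for $\model^\top\bW\model$ clashes with the paper's notation, and $\bs_1$ is the top right singular vector of $\bV\model\bG^{-1}$. For bullets 1 and 3 the paper takes a placeholder matrix whose top singular vector is $\frac{1}{c}\bG\balpha$, then solves for $\bW$ or $\model$ by inverting $\model$ (assumed square and invertible) or $\bV$.

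Your generalized Rayleigh-quotient constructions are the same idea made explicit, and they gain several things:
\begin{itemize}
\item full column rank suffices instead of invertibility;
\item the normalization $\Vert\balpha\Vert_{\bC}=c$ is stated, whereas the paper uses it silently (it is what makes $\frac{1}{c}\bG\balpha$ a unit vector);
\item the top eigenvalue is strict;
\item you avoid the paper's slip of placing $\frac{1}{c}\bG\balpha$ among the \emph{left} singular vectors.
\end{itemize}

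Bullet 2 is where you genuinely differ. The paper sets $\bC=\model^\top\bW\model$. This makes the objective coincide with the constraint function, so every boundary point ties. It is also positive definite only when $\ker\model=\{0\}$, which never holds when $q<d$. Your $\bC=\model^\top\bW\model+\bP$ needs only $\model\balpha\neq 0$, is always positive definite, and leaves $\pm\balpha$ as the only maximizers.

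For the rank-deficient case of bullet 1 that you leave open, the fix you sketch cannot work. If $\pmb{B}\succ 0$ and $\model^\top\pmb{B}\model\balpha=0$, then $\Vert\model\balpha\Vert_{\pmb{B}}^2=0$, i.e.\ $\model\balpha=0$. The pullback should instead send $\balpha$ to a multiple of $\bC\balpha$.

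For $\model\neq 0$ and $\Vert\balpha\Vert_{\bC}=c$, the exact condition is $\bC\balpha\in\mathrm{row}(\model)$.
\begin{itemize}
\item Necessity: a maximizer satisfies $\model^\top\bW\model\balpha=\lambda_{\max}\bC\balpha$ with $\lambda_{\max}>0$, and $\mathrm{range}(\model^\top\bW\model)=\mathrm{row}(\model)$.
\item Sufficiency: write $\bC\balpha=\model^\top\bw$ and $\by=\model\balpha$, and note $\by^\top\bw=\Vert\balpha\Vert_{\bC}^2>0$. Take $\pmb{B}=\bw\bw^\top/(\by^\top\bw)+\identity_{q}-\by\by^\top/\Vert\by\Vert^2$, which is positive definite with $\pmb{B}\by=\bw$. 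Set $\bW=t\,\bw\bw^\top+\pmb{B}$. Then $\model^\top\bW\model\balpha=(t\Vert\balpha\Vert_{\bC}^2+1)\bC\balpha$. On the $\bC$-orthogonal complement of $\balpha$, the matrix $\model^\top\bW\model$ acts as $\model^\top\pmb{B}\model$, independently of $t$. So for large $t$, $\balpha$ is the strict top generalized eigenvector.
\end{itemize}
Since $\mathrm{row}(\model)$ is a proper subspace whenever $q<d$, bullet 1 fails for generic $\balpha$ in that regime. The paper's invertibility assumption is therefore essential, not a convenience.
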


The full proof of Theorem 3.2 is presented in Appendix B.

Theorem 3.2 states that the tuple of \attacker{} parameters \((\knowledge, \capability, \objective)\) is \textbf{non-identifiable}.
No single component of the \attacker{} can be identified from the observed attack $\balpha_{obs}$, even if the others are fixed.
The challenge is therefore to impose additional structure such that the problem is well posed.

We address this challenge by adopting a probabilistic framework in which \defender{} infers the \textit{most probable} \attacker{} parameters given the observed attack.
This framework allows \defender{} to incorporate soft prior beliefs about \attacker{}, without assuming they are correct and while still allowing the evidence from the observed attack to dominate.

Let $p(\knowledge,\capability,\objective)$ denote a prior distribution over \attacker{} parameters.
This prior encodes the \defender{}’s beliefs about \attacker{}.
In particular, the mode of the prior corresponds to the \defender{}’s best estimate of \attacker{} parameters, while its variance captures uncertainty around that estimate.

Given an observed attack $\balpha_{obs}$, we define \defender's objective to be:
\begin{align}
    \hat{\knowledge}, \hat{\capability}, \hat{\objective} &= \arg \max_{\knowledge, \capability, \objective} ~ p(\knowledge, \capability, \objective) \cdot p(\balpha_{obs} | \knowledge, \capability, \objective) \\
    &=  \arg \max_{\knowledge, \capability, \objective} ~ \log p(\knowledge, \capability, \objective) + \log p(\balpha_{obs} | \knowledge, \capability, \objective) \nonumber
\end{align}

We do not assume that \attacker{} is optimal.
Instead, we address the possible suboptimality of \attacker{} with a probability distribution centered at the optimal attack.
That is, the probability of the observed attack $\balpha_{obs}$ given \attacker{}'s parameter $\knowledge, \capability, \objective$ is modeled as the probability of the observed attack $\balpha_{obs}$ given the optimal attack $\balpha_{opt}(\knowledge, \capability, \objective)$. 
\begin{align}
    \hat{\knowledge}, \hat{\capability}, \hat{\objective} = \arg \max_{\knowledge, \capability, \objective} &~ \log p(\knowledge, \capability, \objective) \nonumber \\
    &+ \log p(\balpha_{obs} | \balpha_{opt} (\knowledge, \capability, \objective))
\end{align}

We note that the optimal attack is the result of the attacker's minimization problem, given $\knowledge, \capability, \objective$. 
Hence, we have:
\begin{align}
    \hat{\knowledge}, \hat{\capability}, \hat{\objective} &= \arg \max_{\knowledge, \capability, \objective} ~ \log p(\knowledge, \capability, \objective) + \log p(\balpha_{obs} | \balpha_{opt}) \nonumber \\
    &\text{s.t.}~ \balpha_{opt}= \arg \min_{\balpha} \ell^{\knowledge, \capability, \objective}_\attacker (\balpha) \\
    &\quad \text{s.t.}~ \balpha \in \mathcal{A}_{\attacker}^{\capability}
\end{align}
where $\ell^{\knowledge, \capability, \objective}_\attacker (\balpha)$ is the attacker's loss at attack $\balpha$ and $\mathcal{A}_{\attacker}^{\capability}$ is the set of attacks that are within \attacker{}'s capability.

To explicitly balance reliance on the prior versus the observed attack, we introduce a scalar weight $\lambda \geq 0$, which indicates the weight of the prior:
\begin{align}
    \hat{\knowledge}, \hat{\capability}, \hat{\objective} &= \arg \max_{\knowledge, \capability, \objective} ~ \lambda \cdot \log p(\knowledge, \capability, \objective) + \log p(\balpha_{obs} | \balpha_{opt}) \nonumber
\end{align}

We note that $\lambda$ also captures the attacker’s degree of optimality.
When the attacker is highly optimal, the observed attack is reliable information about the true parameters of \attacker{}, and therefore a small $\lambda$ is appropriate.
In contrast, when the attacker may be noisy or suboptimal, a larger $\lambda$ places greater weight on the \defender{}’s prior beliefs, stabilizing inference.

The prior distributions of $(\knowledge,\capability,\objective)$ should be determined according to the domain of application.
In this paper, remain agnostic and therefore impose independent Gaussian priors on each of the parameters $\knowledge, \capability, \text{ and } \objective$.
In addition to geometric intuition, this yields mathematical niceties we discuss later, but is not crucial to the core of our approach.
Then, we have: 
\begin{align}
    \hat{\knowledge}, \hat{\capability}, \hat{\objective} = \arg \max_{\knowledge, \capability, \objective} &~ \lambda (\log p(\knowledge) + \log p(\capability) + \log p(\objective)) \nonumber\\
    &+ \log p(\balpha_{obs} | \balpha_{opt}) \\
    &\text{s.t.}~ \balpha_{opt} = \arg \min_{\balpha} \ell^{\knowledge, \capability, \objective}_\attacker (\balpha) \\
    &\quad \text{s.t.}~ \balpha \in \mathcal{A}_{\attacker}^{\capability}
\end{align}
In the following subsections, we demonstrate our framework on our three example parameterizations.

\subsection{Linear Regression}

When \defender{} is a linear regressor, \attacker{} is defined by Parameterization \ref{param:1}. 
The \defender's objective is the following bi-level optimization problem:
\begin{align}
    \arg\max_{\model, \bC, c, \bW} & \lambda \log p(\model, \bC, c, \bW) + \log p(\balpha_{obs} | \balpha_{opt}) \nonumber\\
  \text{s.t.}~ \balpha_{opt} &= \arg\max_{\balpha}||\model \balpha||^2_{\bW}\\
  &\text{s.t.~} ||\balpha||_{\bC} \leq c
\end{align}
We model the prior of the \attacker{} parameters with Gaussian distributions defined by  the following parameters:
\begin{align}
    \bmean_{\model} &= {\model}^{*} ~,~ \bSig_{\model} = \identity_{q} ~,~ \bPsi_{\model} = \identity_{d} \\
    \bmean_{\bC} &= \identity_{d} ~,~ \bSig_{\bC} = \identity_{d} ~,~ \bPsi_{\bC} = \identity_{d} \\
    \bmean_{\bW} &= \identity_{q} ~,~ \bSig_{\bW} = \identity_{q} ~,~ \bPsi_{\bW} = \identity_{q}
\end{align}
\noindent where $\bSig \otimes \bPsi$ is the covariance matrix of the distribution.

When \defender{}'s model is linear, we leverage the linearity of the prediction function $f$ to borrow prior results from adversarial machine learning \cite{Alfeld_Zhu_Barford_2016}. 
This results in the following lemma:
\begin{lemma}
    \label{lemma:lin_final_def_opt}
    The linear regression \defender's objective function is equivalent to the following:
    \begin{align}
        \arg\min_{\model,\bC,\bW} \quad & \lambda \cdot \left[~ \Vert \model-\bmean_{\model} \Vert_{F}^{2} \right.  \\
        & \quad\quad + \Vert \bC-\bmean_{\bC} \Vert_{F}^{2} \\
        & \left. \quad\quad + \Vert \bW-\bmean_{\bW} \Vert_{F}^{2} ~\right]\\
        & + \Vert \balpha_{obs} - \Vert \balpha_{opt} \Vert_{\bC} \cdot \bG^{-1} \bs_1 \Vert_{2}^{2}
    \end{align}
\end{lemma}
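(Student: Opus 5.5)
The plan is to turn each of the two terms of the bi-level objective into closed form: the log-prior term from the Gaussian (matrix-normal) priors, and the likelihood term by solving the inner problem explicitly, in the spirit of \cite{Alfeld_Zhu_Barford_2016}. Once the inner $\arg\max$ is replaced by an explicit function of $(\model,\bC,c,\bW)$, the bi-level program becomes a single-level program, and maximizing a log-posterior becomes minimizing its negative.

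\textbf{Prior term.} A matrix-normal prior with mean $\bmean$ and covariance $\bSig\otimes\bPsi$, where $\bSig=\identity$ and $\bPsi=\identity$, has log-density
\begin{equation*}
-\tfrac12\tr\bigl[(\cdot-\bmean)^\top(\cdot-\bmean)\bigr]+\text{const}=-\tfrac12\Vert\cdot-\bmean\Vert_F^2+\text{const}.
\end{equation*}
Under independence, $\log p(\model,\bC,c,\bW)$ is therefore, up to additive constants, $-\tfrac12$ times the sum of the three Frobenius terms. The scalar $c$ carries no informative prior, so it contributes only a constant.

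\textbf{Likelihood term.} I take $p(\balpha_{obs}\mid\balpha_{opt})$ to be an isotropic Gaussian centered at $\balpha_{opt}$. Its log is $-\tfrac12\Vert\balpha_{obs}-\balpha_{opt}\Vert_2^2+\text{const}$. The common factor $\tfrac12$ and all constants can then be dropped, and negating turns the $\arg\max$ into the stated $\arg\min$.

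\textbf{Inner problem.} This is the substantive step. Since $\bC\succ0$, write the Cholesky factorization $\bC=\bG^\top\bG$ with $\bG$ invertible, and change variables to $\pmb{\beta}=\bG\balpha$. The constraint becomes $\Vert\pmb{\beta}\Vert_2\le c$. The objective becomes $\Vert\bW^{1/2}\model\bG^{-1}\pmb{\beta}\Vert_2^2$, a convex quadratic, so its maximum over the ball is attained on the boundary. By the Rayleigh quotient / SVD characterization, the maximizer is $\pmb{\beta}=c\,\bs_1$, where $\bs_1$ is the leading right singular vector of $\bW^{1/2}\model\bG^{-1}$. Hence
\begin{equation*}
\balpha_{opt}=c\,\bG^{-1}\bs_1,
\end{equation*}
and because the constraint is active, $\Vert\balpha_{opt}\Vert_{\bC}=c$. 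Substituting $c=\Vert\balpha_{opt}\Vert_{\bC}$ gives exactly the residual term $\Vert\balpha_{obs}-\Vert\balpha_{opt}\Vert_{\bC}\,\bG^{-1}\bs_1\Vert_2^2$. This also explains why $c$ no longer appears as an explicit optimization variable: it is pinned down by the budget and absorbed into $\Vert\balpha_{opt}\Vert_{\bC}$.

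\textbf{Expected obstacles.} These are technical rather than conceptual.
\begin{itemize}[noitemsep,nolistsep]
\item The inner $\arg\max$ is not unique. The sign of $\bs_1$ is ambiguous, and the top singular value may be repeated. I would handle this by interpreting $\balpha_{opt}$ as any element of the solution set, or by choosing the sign of $\bs_1$ to minimize the residual, and by noting that a simple top singular value holds generically.
\item The constraints $\bC,\bW\succ0$, which make $\bG$ and $\bW^{1/2}$ well defined, are implicit in the domain of the minimization.
\item The prior weight $\lambda$ must be carried through the rescaling by the common factor $\tfrac12$, which it survives unchanged.
\end{itemize}
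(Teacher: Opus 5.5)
Your handling of the prior and likelihood terms follows the paper's derivation in Appendix C, and so does your closed-form solution of the inner problem ($\balpha_{opt}=c\,\bG^{-1}\bs_1$ with the budget active). You also supply the change-of-variables/SVD argument that the paper only cites. The gap is in how you eliminate $c$. Since $\Vert\balpha_{opt}\Vert_{\bC}=c$ holds identically, substituting $c=\Vert\balpha_{opt}\Vert_{\bC}$ is a tautology. The residual is still $\Vert\balpha_{obs}-c\,\bG^{-1}\bs_1\Vert_2^2$ with $c$ a free scalar, so nothing has been ``absorbed.'' Your argument therefore does not justify why the $\arg\min$ in the lemma ranges over $(\model,\bC,\bW)$ only. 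As written, your reduced problem is still a problem in $(\model,\bC,c,\bW)$.

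The missing step is the one the paper makes explicitly. Against a linear model, a (near-)optimal attacker always exhausts its budget, so the defender reads the budget off the \emph{observed} attack and sets $c=\Vert\balpha_{obs}\Vert_{\bC}$. This ties $c$ to $\bC$ and the data, which is what licenses dropping it as a variable. The paper's derivation in Appendix C indeed writes $\Vert\hat\balpha\Vert_{\bC}$ at that point, with $\hat\balpha$ the observed attack, so the $\balpha_{opt}$ inside the norm in the lemma should be read that way.

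You should state this as an assumption rather than try to derive it, because it does not follow from the bi-level problem. Suppose $c$ is free under a flat prior, and set $\pmb{u}=\bG^{-1}\bs_1$, so that $\Vert\pmb{u}\Vert_{\bC}=1$. Minimizing $\Vert\balpha_{obs}-c\,\pmb{u}\Vert_2^2$ over $c$ then gives $c=\pmb{u}^\top\balpha_{obs}/\Vert\pmb{u}\Vert_2^2$. This coincides with $\Vert\balpha_{obs}\Vert_{\bC}$ only when $\balpha_{obs}$ is parallel to $\pmb{u}$. So the ``equivalence'' holds only after adopting the paper's identification of $c$, and once you adopt it, the flat prior on $c$ that you posit becomes unnecessary.
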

See Appendix C for the full derivation of Lemma \ref{lemma:lin_final_def_opt}.

\subsection{Logistic Regression}

We consider a \defender{} that is classifying inputs using a logistic regression model. 
\defender's objective is the following bi-level optimization problem:
\begin{align}
    \arg\max_{\model, \bc_1, \bc_2, y} & \lambda \log p(\model, \bc_1, \bc_2, y) + \log p(\balpha_{obs} | \balpha_{opt}) \nonumber \\
    \text{s.t.}~ \balpha_{opt} &= \arg\max_{\balpha} z_{y^{\dagger}}(\bx + \balpha) \\
    &\text{s.t.~} \bc_1 \preceq \balpha \preceq \bc_2
\end{align}
Because the target class is discrete, we use the distribution of logit vectors $\bz \in \mathbb{R}^{q}$ as the prior of the target class $y^{\dagger}$. 
That is, the defender aims to solve the following surrogate optimization problem:
\begin{align}
    \arg\max_{\model, \bc_1, \bc_2, \bz} & \lambda \log p(\model, \bc_1, \bc_2, \bz) + \log p(\balpha_{obs} | \balpha_{opt}) \nonumber \\
    \text{s.t.}~ \balpha_{opt} &= \arg\max_{\balpha} z_{y^{\dagger}}(\bx + \balpha) \\
    &\text{s.t.~} \bc_1 \preceq \balpha \preceq \bc_2
\end{align}

We model the prior of the \attacker{} parameters with Gaussian distributions defined by  the following parameters:
\begin{align}
    \bmean_{\model} &= {\model}^* ~,~ \bSig_{\model} = \identity_{q} ~,~ \bPsi_{\model} = \identity_{d} \\
    \bmean_{\bc_1} &= \min(\balpha_{obs}) \cdot \mathds{1}_{q} ~,~ \bSig_{\bc_1} = \identity_{d} \\
    \bmean_{\bc_2} &= \max(\balpha_{obs}) \cdot \mathds{1}_{q} ~,~ \bSig_{\bc_2} = \identity_{d} \\
    \bmean_{\bz} &\coloneq 
    \begin{cases}
        {\bz}_i = 1 &\text{if } i = f(\bx+\balpha_{obs}) \\
        {\bz}_i = 0 &\text{else } 
    \end{cases} \\
    \bSig_{\bz} &= \identity_{q}
\end{align}

\begin{lemma}
    \label{lemma:log_final_def_opt}
    The logistic regression \defender's objective function is equivalent to the following:
    \begin{align}
    \hat \model, \hat \bc_1, \hat \bc_2, \hat \bz &=  \arg\min_{\model, \bc_1, \bc_2, \bz} \lambda \left[~ \Vert \model-\bmean_{\model} \Vert_{F}^{2} \right. \\
    & \quad + \Vert \bc_1-\bmean_{\bc_1} \Vert_2^2 + \Vert \bc_2-\bmean_{\bc_2} \Vert_2^2 \\
    & \left. \quad + \Vert \bz-\bmean_{\bz}\Vert_2^2 ~\right] + \balpha_{obs} - \balpha_{opt} \Vert_2^2 \\
    \text{s.t.}~ \balpha_{opt} &= \arg\max_{\balpha} z_{y^{\dagger}}(\bx + \balpha) \\
    &\text{s.t.~} \bc_1 \preceq \balpha \preceq \bc_2
    \end{align}
\end{lemma}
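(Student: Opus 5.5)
We prove Lemma \ref{lemma:log_final_def_opt} by the same log-density calculation used for Lemma \ref{lemma:lin_final_def_opt}, specialized to the logistic parameterization and its Gaussian priors. The idea is that every term in the MAP objective is the log of a Gaussian density. Each such log is a negative multiple of a squared norm plus an additive constant that does not depend on $(\model,\bc_1,\bc_2,\bz)$. Dropping these constants, rescaling, and turning the $\arg\max$ into an $\arg\min$ then gives the stated problem. The inner (attacker) optimization does not change at all; it is carried along as the constraint that defines $\balpha_{opt}$.

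\textbf{Step 1: separate the priors.} The framework assumes independent priors on the parameters, so we write
\[
\log p(\model,\bc_1,\bc_2,\bz)=\log p(\model)+\log p(\bc_1)+\log p(\bc_2)+\log p(\bz).
\]
For $\model$ the prior is matrix-normal with mean $\bmean_{\model}$ and covariance $\bSig_{\model}\otimes\bPsi_{\model}=\identity_q\otimes\identity_d$. Its log-density is
\[
-\tfrac12\,\tr\!\big[\bPsi_{\model}^{-1}(\model-\bmean_{\model})^\top\bSig_{\model}^{-1}(\model-\bmean_{\model})\big]+\text{const}=-\tfrac12\Vert\model-\bmean_{\model}\Vert_F^2+\text{const}.
\]
In the same way, the identity covariances on $\bc_1$, $\bc_2$ and $\bz$ give $-\tfrac12\Vert\bc_1-\bmean_{\bc_1}\Vert_2^2$, $-\tfrac12\Vert\bc_2-\bmean_{\bc_2}\Vert_2^2$ and $-\tfrac12\Vert\bz-\bmean_{\bz}\Vert_2^2$, each up to a constant. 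Here $\bz$ is the continuous surrogate for the discrete target $y^\dagger$, as set up in the surrogate problem above.

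\textbf{Step 2: the likelihood term.} We use the framework's choice of a distribution centered at the optimal attack, taken to be the isotropic Gaussian $\balpha_{obs}\sim\mathcal{N}(\balpha_{opt},\identity_d)$. This is the same choice that produces the final term in Lemma \ref{lemma:lin_final_def_opt}. Its log-density is $-\tfrac12\Vert\balpha_{obs}-\balpha_{opt}\Vert_2^2+\text{const}$.

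\textbf{Step 3: assemble.} Substituting Steps 1 and 2 into the weighted objective $\lambda\log p(\cdot)+\log p(\balpha_{obs}\mid\balpha_{opt})$, the constants do not affect the $\arg\max$ and can be dropped. Every remaining term carries the factor $-\tfrac12$; multiplying through by $-2$ turns the maximization into the stated minimization with $\lambda$ multiplying the bracketed prior terms. The bilevel constraint $\balpha_{opt}=\arg\max_{\balpha} z_{y^\dagger}(\bx+\balpha)$ subject to $\bc_1\preceq\balpha\preceq\bc_2$ is copied over unchanged, and $y^\dagger$ is identified with the coordinate selected by $\bz$ (its argmax).

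\textbf{Main obstacle.} The one delicate point is this identification between $\bz$ and $y^\dagger$ in the inner problem, since the original objective is stated in terms of the discrete $y^\dagger$. I would handle it by stating explicitly that "equivalent" refers to the surrogate problem, in which $y^\dagger$ is read off from $\bz$ as $y^\dagger=\arg\max_i z_i$. The rest of the argument is routine Gaussian log-density algebra.
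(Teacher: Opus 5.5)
Your proposal is correct and follows essentially the same route as the paper's derivation in Appendix D. It uses independent Gaussian priors with identity covariances and the likelihood $\balpha_{obs}\sim\mathcal{N}(\balpha_{opt},\identity_d)$, drops the normalizing constants, clears the common factor $-\tfrac12$ to turn the $\arg\max$ into an $\arg\min$, rewrites the trace and quadratic forms as squared Frobenius and $\ell_2$ norms, and carries the inner attacker problem along unchanged, while your note that the equivalence is with the surrogate problem (reading off $y^\dagger$ as the argmax of $\bz$) matches the paper's own remark that $\hat y^\dagger=\arg\max_i \hat\bz$.
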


See Appendix D for the full derivation of Lemma \ref{lemma:log_final_def_opt}.

\subsection{Artificial Neural Networks}

Lastly, we consider the simple multi-layer perceptron (MLP).
The MLP is the foundation for applying this framework to artificial neural networks (ANNs).

Against the attractive attacker $\attacker{}$, \defender{}'s objective is the following bi-level optimization problem:
\begin{align}
    \underset{\bW_1, \dots, \bW_m, \bc_1, \bc_2, \bz}{\arg\max} & \lambda \log p(\bW_1,\dots,\bW_m,\bc_1, \bc_2,\bz) \\
    & + \log p(\balpha_{obs} | \balpha_{opt}) \\
    & \text{s.t.}~  \balpha_{opt} = \arg\max_{\balpha} [f_{\model}(\bx + \balpha)]_{\dagger} \\
    &\quad \text{s.t.~} \bc_1 \leq \balpha \leq \bc_2
\end{align}

We model the prior of the \attacker{} parameters with Gaussian distributions defined by  the following parameters:
\begin{align}
    \bmean_{\bW_{i}} &= {\bW_i}^{*} ~,~ \bSig_{\bW_i} = \identity_{q} ~,~ \bPsi_{\bW_i} = \identity_{d} \\
    & \text{for}~i~\in \{1, ..., m\}\\
    \bmean_{\bc_1} &= \min(\balpha_{obs}) \cdot \mathds{1}_{q} ~,~ \bSig_{\bc_1} = \identity_{d} \\
    \bmean_{\bc_2} &= \max(\balpha_{obs}) \cdot \mathds{1}_{q} ~,~ \bSig_{\bc_2} = \identity_{d} \\
    \bmean_{\bz} &\coloneq 
    \begin{cases}
        \bz_{i} = 1 &\text{if } i = f(\bx+\balpha_{obs}) \\
        \bz_{i} = 0 &\text{else } 
    \end{cases} \\
    \bSig_{\bz} &= \identity_{q}
\end{align}

{\bf Summary.} 
Our framework leverages soft prior beliefs about \attacker{}'s parameters in order to learn a better estimate of \attacker{}'s true parameters. 
In this paper, we use a Gaussian prior for demonstrative purposes. 
This assumption induces a convenient decomposition of \defender{}’s loss, in which several terms reduce to minimizing squared distances between \defender{}’s parameter estimates and the corresponding prior beliefs.
As a result, much of the inference problem admits a simple and well-posed quadratic structure.
The primary source of complexity lies in the remaining term, which depends on the optimal attack given the \attacker{} parameters.
In the linear regression setting, we exploit prior analytical results, leading to efficient and stable inference.
In contrast, for non-linear models such as logistic regression and MLPs, this component must be handled via an inner optimization procedure that approximates the optimal attack, substantially increasing computational difficulty.

\section{Experiments}

Our proposed framework is the foundation of a general tool for reverse engineering attacker characteristics. 
In what follows, we provide an empirical proof-of-concept by showing that the \attacker{} parameters $\hat\knowledge, \hat\capability, \hat\objective$ learned by \defender{} are accurate --- i.e., they closely match the actual parameters $\knowledge, \capability, \objective$ of \attacker{}, ensuring that the information is useful for exogenous mitigation.
 
Throughout, we use simple off-the-shelf learners that (although they perform well) are not highly tuned to the tasks.
Similarly, we use standard optimization methods (projected gradient descent) without hand-tuning aspects for performance.
Full details of the experimental setup are outlined in Appendix E.

As a baseline of comparison, we consider the defender who assumes \attacker's parameters are the mode of the prior distribution \((\knowledge_0, \capability_0, \objective_0)\). 
Our method is successful when the observed attack is better explained by the estimated \((\hat\knowledge, \hat\capability, \hat\objective)\) than by \((\knowledge_0, \capability_0, \objective_0)\). 
The error of an attacker parameter tuple \((\knowledge, \capability, \objective)\) is defined as:
\begin{align}
    Err(\knowledge, \capability, \objective) = \Vert \balpha_{opt}(\knowledge, \capability, \objective) - \balpha_{opt}(\knowledge^*, \capability^*, \objective^*) \Vert_{F} \nonumber
\end{align}
We quantify success by the percent error reduction (\texttt{PER}):
\begin{align}
    \texttt{PER} = \frac{Err(\knowledge_0, \capability_0, \objective_0) - Err(\hat\knowledge, \hat\capability, \hat\objective)}{Err(\knowledge_0, \capability_0, \objective_0)} \cdot 100
\end{align}
We note that when \texttt{PER} is positive, \defender{} is successful.

To evaluate our three cases, we use both synthetic and real datasets. 
For Parametrization 1, we generate synthetic regression data by sampling a true model matrix $\model \in \mathbb{R}^{q\times d}$ and $n$ samples in $d$ dimensional input space from a Gaussian distribution.
For Parameterizations 2 and 3, we use the Pen-Based Recognition of Handwritten Digits dataset \cite{pen-based_recognition_of_handwritten_digits_81}.

\subsection{Results}
Our method is successful is the vast majority of trials.
Table \ref{tab:4.1_results_table} shows the median and maximum percent reduction in the error of the learned \attacker{} parameters relative to the error of the prior \attacker{} parameters. 
In the case of Parameterization 1 (Linear Regression), our framework shows significant and reliable improvements, with a median error reduction of $99.14\%$ and a maximum error reduction of $99.65\%$.
For Parameterizations 2 and 3 (Logistic Regression and MLP), our framework shows significant improvements, with maximum error reductions of $84.56\%$ and $71.68\%$, respectively. 
Our results show higher variance for Parameterizations 2 and 3, which we discuss further in Section 6.

\begin{table}[t!]
    \centering
    \resizebox{\columnwidth}{!}{%
        \begin{tabular}{|c|c|c|c|c|}
            \hline
            Defender Type & Attacker Type & Med & Max & \% trials $\texttt{PER} > 0$\\
            \hline 
            Linear Regression & Repulsive & 99.14 & 99.65 & 91\\
            \hline
            Logistic Regression & Attractive & 13.35 & 84.56 & 66\\
            \hline
            Multi-layer Perceptron & Attractive & 25.25 & 71.68 & 84\\
            \hline
        \end{tabular}
    }
    \caption{Median and maximum percent error reduction (\texttt{PER}) between optimal attacks by prior attacker $\attacker_0$ and learned attacker $\hat\attacker$. Median was taken over 100 trials of varied true attackers $\attacker^*$.}
    \label{tab:4.1_results_table}
\end{table}

\section{Related Work}

Adversarial learning is a well-established research area.
We direct the reader to Biggio and Roli (2018), Vorobeychik and Kantarcioglu (2018) and other surveys \cite{chakraborty2018adversarialattacksdefencessurvey, 10.1145/3485133}  for an overview of the field.
In this paper, we focus on the deployment-time attacker, which is an especially well-studied class of adversarial attacks.
The deployment-time attacker is particularly relevant in the field of computer vision, where imperceptible input perturbations have been shown to reliably induce misclassification in high-capacity models \cite{szegedy2014intriguingpropertiesneuralnetworks, moosavidezfooli2018robustnesscurvatureregularizationvice}.
In contrast to much of this literature, our contribution is not the proposal of a new attack or defense mechanism; rather, we develop a framework that enables the defender to infer key properties of the attacker from observed attacks.

Reverse Engineering of Deception (RED) is an emerging area of adversarial learning that aims to extract insights into an adversary by examining observed attacks.
Prior work in this area has shown that adversarial perturbations can exhibit structured, attacker-dependent patterns (characteristic geometric or statistical properties of the perturbation) that can be exploited to identify or classify the underlying attack mechanism \cite{10.1007/978-981-99-8082-6_7, 10446989}.
These approaches demonstrate that observed attacks encode informative signals about the attacker.
However, they are typically tailored to specific attack families ($\ell_p$-constrained attacks) and rely on strong assumptions about the attacker’s optimization procedure or threat model \cite{pmlr-v162-thaker22a}.

Many RED approaches are additionally limited to particular input domains, such as images.
For instance, prior work has shown that it is possible to reverse engineer the exact perturbations made in order to produce an adversarial image \cite{gong2022reverseengineeringimperceptibleadversarial}.
It is also possible to attribute an adversarially attacked image to a particular
attack toolchain or family \cite{doi:10.2352/ISSN.2470-1173.2021.4.MWSF-300}.

In contrast, our framework is domain-agnostic and does not assume, but \textit{learns} the attacker's perturbation structure.
This abstraction allows our approach to generalize across attack types and learning models.

\section{Discussion}

The primary contribution of this paper is an domain-agnostic framework for determining characteristics of the attacker from an observed attack.
We leverage the defender's prior beliefs and reverse engineer the attacker.
We treat the attack as the outcome of the attacker’s optimization problem and infer the attacker’s underlying parameters by solving a bi-level optimization problem.
We hypothesize that after reverse engineering the attacker with our framework, the defender can use that knowledge to better defend.

\subsection{Limitations} 

Our framework reliably achieves significant error reduction, as seen in Section 4.1. However, performance shows higher variance for Parameterizations 2 and 3 (Logistic Regression and MLP) relative to Parameterization 1 (Linear Regression). 
We posit this phenomenon is due to the following three factors: (1) the increased number of parameters, (2) the non-linearity of the defender's prediction function, and (3) the suboptimality of the attacker.

In Parameterizations 2 and 3, the number of parameters to learn increases as compared to Parameterization 1.
Since this directly affects the number of parameters for the defender to learn through our framework (i.e., it increases the dimensionality of the $\knowledge$ parameters), this increases the difficulty of the defender's problem.
Hence, the results achieved through our framework in these cases may provide less information about the true attacker, in turn limiting the extent to which they can be leveraged to meaningfully improve defenses. 
In such cases, a strong prior is essential to achieving high performance.

In addition, the defender's prediction functions in Parameterizations 2 and 3 are non-linear. 
The non-linear functions alter the geometry of the optimization problem, particularly in the inner problem (finding the optimal attack).
In contrast to the linear case, where the mapping from attacker parameters to the optimal attack is possible via spectral decomposition, non-linear prediction functions result in a non-convex optimization problem with respect to the attacker parameters.
This non-convexity introduces multiple local optima and flat regions, intensifying the non-identifiability of the problem and leading to higher variance in the recovered parameters.
Therefore non-linear parameterizations constitute a limiting factor for the stability and consistency of our framework and increase reliance on the prior.

Partially a result of these two limitations, the suboptimality of the attacker in Parameterizations 2 and 3 is the most critical limitation of our current framework. 
To implement our framework, the defender must solve the inner optimization problem at each step of the outer optimization problem. 
Hence, our framework shows reliable performance in the case of Parameterization 1, where there is an analytical solution for the optimal attack \cite{Alfeld_Zhu_Barford_2017}.
For most other models, including Logistic Regression and MLPs, the attacks are generally suboptimal. 
This mismatch introduces bias into the outer optimization, which leads to higher variance in performance of our framework.
We believe this to be more than a computational barrier and warrants further investigation.

Empirically, our framework still shows significant error reduction in most cases despite these limitations. 

\subsection{Future Work}

To improve deployment of our framework for modern, large, AI systems, the three challenges outlined above should be addressed.
Furthermore, it is still an open question as to what exactly causes the high variance in performance of our method.
A deeper empirical investigation may yield both clues as to the underlying phenomenon as well as methods by which the defender can estimate how accurate their prediction (of the attacker's parameters) is. 
Regardless, based on the superb performance of our method on linear regression, our immediate next step would be to demonstrate how the defender can use the knowledge gained via our framework to better defend.

In addition, we have focused on the setting where a single attack is observed.
If instead a campaign (a series of attacks) from an attacker is observed, the defender would have more information that could be leveraged to better identify the attacker's characteristics.
Our framework and observations serves as a foundation for understanding and building deployable systems for the growing field of Reverse Engineering Deception (RED).

\section{Conclusions}

We introduced a framework which, given an observed data-manipulation attack, finds the most probable attacker.
Doing so helps a defender by providing information they can use to mitigate the threat outside of the learning process e.g., track-down, punish, or disable the attacker. 
We discovered that often the attacker is non-identifiable, meaning that multiple proposed attackers are equally good explanations of the observed attack.
Our framework addresses this challenge via a prior distribution over attackers and then identifying the most probable attacker.
We presented a simple proof-of-concept of our methods across a variety of learners, achieving a significant percent error reduction (\texttt{PER}).

In total, our work serves two purposes.
First, it provides a tool usable by practitioners to secure their systems.
Second, it lays the foundation for further research into the fundamental question of what information is leaked by an attacker when they manipulate data.

\vfill

\pagebreak

\bibliographystyle{unsrt}  
\bibliography{refs}

\newpage
\appendix
\onecolumn

\section{Notation Table}

\begin{table*}[h]
    \caption[Mathematical notation.]{Mathematical notation.}
    \centering
    \begin{tabular}{|c|c|}
    \hline
        \textbf{Variable} & \textbf{Description} \\
    \hline
        $d$ & Input dimension \\
    \hline
        $q$ & Output dimension \\
    \hline
        $c$ & $\ell_{\infty}$ constraint on attacker perturbation \\ 
    \hline 
        $\bc_1$ & Lower box constraint on attacker perturbation \\ 
    \hline 
        $\bc_2$ & Upper box constraint on attacker perturbation \\ 
    \hline
        $\lambda$ & Weight of the prior belief \\ 
    \hline
        $\bx$ & An input vector \\
    \hline
        $\balpha$ & An attack vector\\
    \hline
        $\balpha_{obs}$ & An \textit{observed} attack vector\\
    \hline
        $\balpha_{opt}$ & The \textit{optimal} attack vector\\
    \hline
        $\bz$ & The logits vector of a classification model \\
    \hline
        $\model$ & \defender's model weights (e.g., linear \defender{}'s model matrix) \\
    \hline
        $\bW$ & Defining matrix of Mahalanobis norm in defender loss \\
    \hline
        $\bC$ & Defining matrix of Mahalanobis norm constraint on attacker perturbation  \\
    \hline
        $\bV$ & Square root of $\bW$ (i.e., $\bW = \bV^{\top}\bV$)  \\
    \hline
        $\bG$ & Square root of $\bC$ (i.e., $\bC = \bG^{\top}\bG$)  \\
    \hline
        $\knowledge$ & Set of parameters defining the knowledge of the attacker \\
    \hline
        $\capability$ & Set of parameters defining the capability of the attacker \\
    \hline
        $\objective$ & Set of parameters defining the objective of the attacker \\
    \hline

    \end{tabular}
    \label{tab:notation_table}
\end{table*}

\section{Proof of Theorem 3.2}

\textbf{Theorem 3.2. } In the case of a linear \defender{} with \attacker{} Parameterization \ref{param:1}, the following hold for any attack $\balpha$:
    \begin{itemize}
        \item $\forall \knowledge, \capability,~ \exists \objective ~ \text{s.t.} ~ \attacker \in \mathcal{S}_{\balpha}$
        \item $\forall \knowledge, \objective,~ \exists \capability ~ \text{s.t.} ~ \attacker \in \mathcal{S}_{\balpha}$
        \item $\forall \capability, \objective,~ \exists \knowledge ~ \text{s.t.} ~ \attacker \in \mathcal{S}_{\balpha}$
    \end{itemize}

\noindent \textbf{Proof.}

We will prove each of the three statements below.

\noindent \textbf{1.} $\forall \knowledge, \capability,~ \exists \objective ~ \text{s.t.} ~ \attacker \in \mathcal{S}_{\balpha}$

This statement can be proved for any invertible model matrix $\model$. 

We can construct $\bV$ such that the largest singular value of $\bV \model \bG^{-1}$ corresponds to the vector $\frac{1}{c} \bG \balpha$. 

First, construct a placeholder matrix $\bV$ (utilizing singular value decomposition):
\begin{align}
    \bV &= \hat{\mathbf{U}}\hat{\mathbf{\Sigma}}\hat{\bV}^{\top} \\
    &= 
    \begin{bmatrix}
    & & \\
    & & \\
    \frac{1}{c} \bG \balpha^* & ... & \\
    & & \\
    & &
    \end{bmatrix}
    \hat{\mathbf{\Sigma}}\hat{\bV}^{\top} \\
\end{align}

Then, adjust $\bV$ to account for $\model$ and $\bG^{-1}$.
\begin{equation}
    \bV' = \bV \bG \model^{-1}
\end{equation}

Then, we have:
\begin{align}
    \lVert \bV' \model \bG^{-1} \rVert &= \lVert \bV \bG \model^{-1} \model \bG^{-1} \rVert \\
    &= \lVert \bV \bG \bG^{-1} \rVert \\
    &= \lVert \bV \rVert
\end{align}

Thus, the largest singular vector $s_{1}$ of $\bV' \model \bG^{-1}$ is the largest singular vector of $\bV$, which is clearly $\frac{1}{c} \bG \balpha$. 
Hence, the optimal attack is  $\balpha$.

Let $\bW = \bV'^{\top} \bV'$. 
Then, we have picked the objective $\objective = \{ \bW \}$ such that the attacker constraints and model belief result in the attack $\balpha$. 
Hence, $\forall \knowledge, \capability,~ \exists \objective ~ \text{s.t.} ~ \attacker \in \mathcal{S}_{\balpha}$. $\square$

\hfill

\noindent \textbf{2.} $\forall \knowledge, \objective,~ \exists \capability ~ \text{s.t.} ~ \attacker \in \mathcal{S}_{\balpha}$

Let $\model$ be the model that the attacker assumed of the defender. 

Let $\bW$ = some positive definite matrix. This defines the goal of the attacker, i.e., $\objective$.

We know that $||\balpha||_{\bC} \leq c$. 
That is, the attack must be within the capabilities of the attacker. 

Consider the goal of the attacker --- to maximize the defender loss on the adversarial example. 
Assume that the attacker's attack was approximately optimal. 
Then, 

\begin{equation}
\begin{split}
& {\lVert \model \balpha \rVert}^2_{\bW} \\
& = \balpha^{\top} \model^{\top} \bW \model \balpha \quad \text{was maximized.}
\end{split}
\end{equation}

Let $\bC = \model^{\top} \bW \model$. 
Then, 

\begin{align}
{\lVert \balpha \rVert}_{\bC} &= \balpha^{\top} \bC \balpha \\
&= \balpha^{\top} \model^{\top} \bW \model \balpha \\
&= \left( \model \balpha \right)^{\top} \bW \left( \model \balpha \right) \\
&= {\lVert \model \balpha \rVert}^2_{\bW}
\end{align}

Let $c = {\lVert \model \balpha \rVert}^2_{\bW}$. 

Then, the attacker is constrained by the exact value it is maximizing. 
The upper bound is also the maximum value of the objective.
Hence, we have found a $\capability$ such that for any $\knowledge,~\objective$, the attacker would learn exactly $\balpha$, i.e., $\attacker \in \mathcal{S}_{\balpha}$. $\square$

\hfill

\noindent \textbf{3. } $\forall \capability, \objective,~ \exists \knowledge ~ \text{s.t.} ~ \attacker \in \mathcal{S}_{\balpha}$

The proof follows similarly to the proof of statement 1.
We can construct $\model$ such that the largest singular value of $\bV \model \bG^{-1}$ corresponds to the vector $\frac{1}{c} \bG \balpha^*$. 

First, construct a placeholder model $\model$ (utilizing singular value decomposition):
\begin{align}
    \model &= \hat{\mathbf{U}}\hat{\mathbf{\Sigma}}\hat{\bV}^{\top} \\
    &= 
    \begin{bmatrix}
    & & \\
    & & \\
    \frac{1}{c} \bG \balpha^* & ... & \\
    & & \\
    & &
    \end{bmatrix}
    \hat{\mathbf{\Sigma}}\hat{\bV}^{\top} \\
\end{align}

Then, adjust $\model$ to account for $\bV$ and $\bG^{-1}$.
\begin{equation}
    \model' = \bV^{-1} \model \bG
\end{equation}

Then, we have:
\begin{align}
    \lVert \bV \model' \bG^{-1} \rVert &= \lVert \bV \bV^{-1} \model \bG \bG^{-1} \rVert \\
    &= \lVert \model \rVert
\end{align}

Thus, the largest singular vector $s_{1}$ of $\bV \model \bG^{-1}$ is the largest singular vector of $\model$, which is clearly $\frac{1}{c} \bG \balpha$.
Hence, the optimal attack is $\balpha$.

Then, we have picked the attacker's assumed defender model $\knowledge = \{\model\}$ such that the attacker constraints and objective result in the attack $\balpha$. 
Hence, $\forall \capability, \objective,~ \exists \knowledge ~ \text{s.t.} ~ \attacker \in \mathcal{S}_{\balpha}$. $\square$

\section{Derivation of Linear Defender's Optimization Equation}

We have: 
\begin{align}
    \hat \model, \hat \bC, \hat \bW  &= \arg\max_{\model, \bC, c, \bW} \lambda \log p(\model, \bC, c, \bW) + \log p(\balpha_{obs} | \balpha_{opt})\\
  \text{s.t.}~ \balpha_{opt} &= \arg\max_{\balpha}||\model \balpha||^2_{\bW}\\
  &\text{s.t.~} ||\balpha||_{\bC} \leq c
\end{align}

\noindent Let $\bV, \bG$ denote the square root matrices of $\bW, \bC$ respectively (i.e., $\bW = \bV^{\top} \bV ~, ~ \bC = \bG^{\top} \bG$).
Since we have the analytical solution to the lower-level problem, we can plug it in:
\begin{align}
    \hat \model, \hat \bC, \hat \bW  &= \arg\max_{\model, \bC, c, \bW} \lambda \log p(\model, \bC, c, \bW) + \log p(\balpha_{obs} | \balpha_{opt})\\
  \text{s.t.}~ \balpha_{opt} &= c \bG^{-1} \bs_1 
\end{align}
\noindent where $\bs_1$ is the right singular vector corresponding to the spectral norm of $\bV \model \bG^{-1}$.

Notice that given a linear model, moving the input point as much as possible will always increase the loss of the defender. 
That is, assuming that the attacker is near optimal, the resulting attack vector $\balpha_{opt}$ \textbf{must} have a $\bC$-norm that equals $\hat c$ (i.e., $\Vert \balpha_{opt} \Vert_{\hat\bC} = \hat c$). 

Therefore, given the observed attack $\balpha_{obs}$ and the current estimate of $\bC$, the defender can always assume that $c = \Vert \balpha_{obs} \Vert_{\bC}$.
This results in the following optimization problem, where $c$ is no longer an optimization parameter because it is directly tied to $\bC$.
\begin{align}
    \hat \model, \hat \bC, \hat \bW &= \arg\max_{\model, \bC, \bW} \lambda \log p(\model, \bC, c, \bW) + \log p(\balpha_{obs} | \balpha_{opt}) \\
  \text{s.t.}~ \balpha_{opt} &= c \bG^{-1} \bs_1 \\
    c &= \Vert \balpha_{obs} \Vert_{\bC}
\end{align}

Now we define the other parameters. 
Assume the following Gaussian distributions over the attacker parameters:
\begin{align}
    p(\model) &= \frac{1}{(2\pi)^{dq/2}} \det ( \bSig_{\model})^{-\frac{1}{2}d} \det ( \bPsi_{\model})^{-\frac{1}{2}q} \exp \left( \tr \left[ -\frac{1}{2} \bSig_{\model}^{-1} (\model-\bmean_{\model}) \bPsi_{\model}^{-1} (\model - \bmean_{\model})^{\top} \right] \right) \\
    p(\bC) &= \frac{1}{(2\pi)^{dq/2}} \det ( \bSig_{\bC})^{-\frac{1}{2}d} \det ( \bPsi_{\bC})^{-\frac{1}{2}q} \exp \left( \tr \left[ -\frac{1}{2} \bSig_{\bC}^{-1} (\bC-\bmean_{\bC}) \bPsi_{\bC}^{-1} (\bC - \bmean_{\bC})^{\top} \right] \right) \\
    p(\bW) &= \frac{1}{(2\pi)^{dq/2}} \det ( \bSig_{\bW})^{-\frac{1}{2}d} \det ( \bPsi_{\bW})^{-\frac{1}{2}q} \exp \left( \tr \left[ -\frac{1}{2} \bSig_{\bW}^{-1} (\bW-\bmean_{\bW}) \bPsi_{\bW}^{-1} (\bW - \bmean_{\bW})^{\top} \right] \right)
\end{align}

\noindent The distributions are defined by the following parameters:
\begin{align}
    \bmean_{\model} &= {\model}^{*} ~,~ \bSig_{\model} = \identity_{q} ~,~ \bPsi_{\model} = \identity_{d} \\
    \bmean_{\bC} &= \identity_{d} ~,~ \bSig_{\bC} = \identity_{d} ~,~ \bPsi_{\bC} = \identity_{d} \\
    \bmean_{\bW} &= \identity_{q} ~,~ \bSig_{\bW} = \identity_{q} ~,~ \bPsi_{\bW} = \identity_{q}
\end{align}
\noindent where $\bSig \otimes \bPsi$ is the covariance matrix of the distribution.

Lastly, we define the conditional probability of our observed attack $\hat\balpha$ as a Gaussian centered at the optimal attack $\balpha^{*}$:
\begin{align}
    p(\balpha_{obs} | \balpha_{opt}) &= \frac{1}{(2\pi)^{d/2} \det (\bSig_{\balpha})^{\frac{1}{2}}} \exp \left( -\frac{1}{2} (\balpha_{obs} - \balpha_{opt}) \bSig_{\balpha}^{-1} (\balpha_{obs} - \balpha_{opt})^{\top} \right)
\end{align}
\noindent where $\bSig_{\balpha} = \identity_{d}$.

So putting everything together and eliminating constants, we have:
\begin{align}
    \hat\model , \hat\bC , \hat\bW = \arg\max_{\model,\bC,\bW} ~ &\lambda \left[ \tr \left[ -\frac{1}{2} \bSig_{\model}^{-1} (\model-\bmean_{\model}) \bPsi_{\model}^{-1} (\model - \bmean_{\model})^{\top} \right] \right.  \\
    & + \tr \left[ -\frac{1}{2} \bSig_{\bC}^{-1} (\bC-\bmean_{\bC}) \bPsi_{\bC}^{-1} (\bC - \bmean_{\bC})^{\top} \right] \\
    & \left. + \tr \left[ -\frac{1}{2} \bSig_{\bW}^{-1} (\bW-\bmean_{\bW}) \bPsi_{\bW}^{-1} (\bW - \bmean_{\bW})^{\top} \right] \right]\\
    & - \frac{1}{2} (\hat \balpha - \balpha^{*}) \bSig_{\balpha}^{-1} (\hat \balpha - \balpha^{*})^{\top}
\end{align}

\noindent Let's now plug in our solution for the optimal attack for our final optimization equation:
\begin{align}
    \label{eqn:final_def_opt}
    \hat\model , \hat\bC , \hat\bW = \arg\max_{\model,\bC,\bW} ~ &\lambda \left[ \tr \left[ -\frac{1}{2} \bSig_{\model}^{-1} (\model-\bmean_{\model}) \bPsi_{\model}^{-1} (\model - \bmean_{\model})^{\top} \right] \right.  \\
    & + \tr \left[ -\frac{1}{2} \bSig_{\bC}^{-1} (\bC-\bmean_{\bC}) \bPsi_{\bC}^{-1} (\bC - \bmean_{\bC})^{\top} \right] \\
    & \left. + \tr \left[ -\frac{1}{2} \bSig_{\bW}^{-1} (\bW-\bmean_{\bW}) \bPsi_{\bW}^{-1} (\bW - \bmean_{\bW})^{\top} \right] \right]\\
    & - \frac{1}{2} (\hat \balpha - \Vert \hat \balpha \Vert_{\bC} \cdot \bG^{-1} \bs_1) \bSig_{\balpha}^{-1} (\hat \balpha - \Vert \hat \balpha \Vert_{\bC} \cdot \bG^{-1} \bs_1)^{\top}
\end{align}

Recall that:
\begin{align}
    \bSig_{\model} &= \identity_{q} ~,~ \bPsi_{\model} = \identity_{d} \\
    \bSig_{\bC} &= \identity_{d} ~,~ \bPsi_{\bC} = \identity_{d} \\
    \bSig_{\bW} &= \identity_{q} ~,~ \bPsi_{\bW} = \identity_{q} \\ 
    \bSig_{\balpha} &= \identity_{d}
\end{align}

Since the inverse of the identity matrix is simply the identity matrix, we have:
\begin{align}
    \bSig_{\model}^{-1} &= \identity_{q} ~,~ \bPsi_{\model}^{-1} = \identity_{d} \\
    \bSig_{\bC}^{-1} &= \identity_{d} ~,~ \bPsi_{\bC}^{-1} = \identity_{d} \\
    \bSig_{\bW}^{-1} &= \identity_{q} ~,~ \bPsi_{\bW}^{-1} = \identity_{q} \\
    \bSig_{\balpha}^{-1} &= \identity_{d}
\end{align}

Hence, we have:
\begin{align}
    \hat\model , \hat\bC , \hat\bW = \arg\max_{\model,\bC,\bW} ~ &\lambda \left[ \tr \left[ -\frac{1}{2} (\model-\bmean_{\model}) (\model - \bmean_{\model})^{\top} \right] \right.  \\
    & + \tr \left[ -\frac{1}{2} (\bC-\bmean_{\bC}) (\bC - \bmean_{\bC})^{\top} \right] \\
    & \left. + \tr \left[ -\frac{1}{2} (\bW-\bmean_{\bW}) (\bW - \bmean_{\bW})^{\top} \right] \right]\\
    & - \frac{1}{2} (\hat \balpha - c \bG^{-1} \bs_1) (\hat \balpha - c \bG^{-1} \bs_1)^{\top} \\
\end{align}

Since $\frac{1}{2}$ is a constant, we can take it out of the objective.
\begin{align}
    \hat\model , \hat\bC , \hat\bW = \arg\min_{\model,\bC,\bW} ~ &\lambda \left[ \tr \left[(\model-\bmean_{\model}) (\model - \bmean_{\model})^{\top} \right] \right.  \\
    & + \tr \left[ (\bC-\bmean_{\bC}) (\bC - \bmean_{\bC})^{\top} \right] \\
    & \left. + \tr \left[ (\bW-\bmean_{\bW}) (\bW - \bmean_{\bW})^{\top} \right] \right]\\
    & + (\hat \balpha - c \bG^{-1} \bs_1) (\hat \balpha - c \bG^{-1} \bs_1)^{\top} \\
\end{align}

Note that for any matrix $\bA$,
\begin{equation}
    \Vert \bA \Vert_{F} = \sqrt{\tr (\bA^{\top}\bA)}
\end{equation}
\noindent where $\Vert\cdot\Vert_{F}$ is the Frobenius norm.

Hence, we can replace the trace operators in our equation to get:
    \begin{align}
        \hat\model , \hat\bC , \hat\bW = \arg\min_{\model,\bC,\bW} \quad &\lambda \cdot \left[~ \Vert \model-\bmean_{\model} \Vert_{F}^{2} \right.  \\
        & \quad\quad + \Vert \bC-\bmean_{\bC} \Vert_{F}^{2} \\
        & \left. \quad\quad + \Vert \bW-\bmean_{\bW} \Vert_{F}^{2} ~\right]\\
        & + \Vert \balpha_{obs} - \Vert \balpha_{opt} \Vert_{\bC} \cdot \bG^{-1} \bs_1 \Vert_{2}^{2}
    \end{align}

\section{Derivation of Logistic Regression Defender's Optimization Equation}

\defender's objective is the following bi-level optimization problem:
\begin{align}
    \hat \model, \hat \bc_1, \hat \bc_2, \hat y_{\dagger} &= \arg\max_{\model, \bc_1, \bc_2, y} \lambda \log p(\model, \bc_1, \bc_2, y) + \log p(\balpha_{obs} | \balpha_{opt}) \nonumber \\
    \text{s.t.}~ \balpha_{opt} &= \arg\max_{\balpha} z_{y^{\dagger}}(\bx + \balpha) \\
    &\text{s.t.~} \bc_1 \preceq \balpha \preceq \bc_2
\end{align}

Assume the following independent Gaussian distributions over the attacker parameters:
\begin{align}
    p(\model) &= \frac{1}{(2\pi)^{dq/2}} \det ( \bSig_{\model})^{-\frac{1}{2}d} \det ( \bPsi_{\model})^{-\frac{1}{2}q} \exp \left( \tr \left[ -\frac{1}{2} \bSig_{\model}^{-1} (\model-\bmean_{\model}) \bPsi_{\model}^{-1} (\model - \bmean_{\model})^{\top} \right] \right) \\
    p(\bc_1) &= \frac{1}{(2\pi)^{d/2}} \det ( \bSig_{\bc_1})^{-\frac{1}{2}} \exp \left( -\frac{1}{2} (\bc_1-\bmean_{\bc_1})^{\top} \bSig_{\bc_1}^{-1} (\bc_1 - \bmean_{\bc_1}) \right) \\
    p(\bc_2) &= \frac{1}{(2\pi)^{d/2}} \det ( \bSig_{\bc_2})^{-\frac{1}{2}} \exp \left( -\frac{1}{2} (\bc_2-\bmean_{\bc_2})^{\top} \bSig_{\bc_2}^{-1} (\bc_2 - \bmean_{\bc_2}) \right)
\end{align}

\noindent The distributions are defined by the following parameters:
\begin{align}
    \bmean_{\model} &= {\model}^{*} ~,~ \bSig_{\model} = \identity_{q} ~,~ \bPsi_{\model} = \identity_{d} \\
    \bmean_{\bc_1} &= \min(\balpha_{obs}) \cdot \mathds{1}_{q} ~,~ \bSig_{\bc_1} = \identity_{d} \\
    \bmean_{\bc_2} &= \max(\balpha_{obs}) \cdot \mathds{1}_{q} ~,~ \bSig_{\bc_2} = \identity_{d}
\end{align}
\noindent where $\mathds{1}_{q}$ is a vector of all ones in $\mathbb{R}^{q}$.

Because the target class is discrete, we use the distribution of logit vectors $\bz \in \mathbb{R}^{q}$ as the prior of the target class $y^{\dagger}$. 
That is, the defender aims to solve the following surrogate optimization problem:
\begin{align}
    \hat \model, \hat \bc_1, \hat \bc_2, \hat y_{\dagger} &= \arg\max_{\model, \bc_1, \bc_2, \bz}  \lambda \log p(\model, \bc_1, \bc_2, \bz) + \log p(\balpha_{obs} | \balpha_{opt}) \\
    \text{s.t.}~ \balpha_{opt} &= \arg\max_{\balpha} z_{y^{\dagger}}(\bx + \balpha) \\
    &\text{s.t.~} \bc_1 \preceq \balpha \preceq \bc_2
\end{align}

We impose a Gaussian prior on the distribution of logit vectors $\bz$:
\begin{align}
        p(\bz) = \frac{1}{(2\pi)^{q/2}} \det ( \bSig_{\bz})^{-\frac{1}{2}} \exp \left( -\frac{1}{2} (\bz-\bmean_{\bz})^{\top} \bSig_{\bz}^{-1} (\bz - \bmean_{\bz}) \right)
\end{align}
\noindent where
\begin{align}
        \bmean_{\bz} &\coloneq 
            \begin{cases}
                \bz_i = 1 &\text{if } i = f(\bx+\hat\balpha) \\
                \bz_i = 0 &\text{else } 
            \end{cases} 
            \\
        \bSig_{\bz} &= \identity_{q}
\end{align}
That is, we center the distribution of $\bz$ at the one-hot encoding of the class of the perturbed input $\bx+\balpha$.

Note that we can still find the learned target class $\hat{y}^{\dagger}$; it is the class with highest probability in the learned logits vector $\hat\bz$.
\begin{align}
    \hat{y}^{\dagger} = \arg\max_{i\in\{1,\dots,q\}} \hat\bz
\end{align}

We define the conditional probability of our observed attack $\balpha_{obs}$ as a Gaussian centered at the optimal attack $\balpha_{opt}$.
\begin{align}
    p(\balpha_{obs} | \balpha_{opt}) &= \frac{1}{(2\pi)^{d/2} \det (\bSig_{\balpha})^{\frac{1}{2}}} \exp \left( -\frac{1}{2} (\balpha_{obs} - \balpha_{opt})^{\top} \bSig_{\balpha}^{-1} (\balpha_{obs} - \balpha_{opt}) \right)
\end{align}
\noindent where $\bSig_{\balpha} = \identity_{d}$.

In this case, we don't have an analytical solution for the optimal attack $\balpha_{opt}$. 
Hence, we must run the attacker's optimization for $\balpha_{opt}$ with the current attacker parameters at each training epoch.

So putting everything together and eliminating constants, we have:
\begin{align}
    \hat \model, \hat \bc_1, \hat \bc_2, \hat \bz = \arg\max_{\model, \bc_1, \bc_2, \bz} & \lambda \left[ \tr \left[ -\frac{1}{2} \bSig_{\model}^{-1} (\model-\bmean_{\model}) \bPsi_{\model}^{-1} (\model - \bmean_{\model})^{\top} \right] \right. \\
    &+ -\frac{1}{2} (\bc_1-\bmean_{\bc_1})^{\top} \bSig_{\bc_1}^{-1} (\bc_1 - \bmean_{\bc_1}) \\
    &+-\frac{1}{2} (\bc_2-\bmean_{\bc_2})^{\top} \bSig_{\bc_2}^{-1} (\bc_2 - \bmean_{\bc_2}) \\
    &+ \left. -\frac{1}{2} (\bz-\bmean_{\bz})^{\top} \bSig_{\bz}^{-1} (\bz - \bmean_{\bz}) \right] \\
    &+ -\frac{1}{2} (\balpha_{obs} - \balpha_{opt})^{\top} \bSig_{\balpha}^{-1} (\balpha_{obs} - \balpha_{opt})
\end{align}

Recall that:
\begin{align}
    \bSig_{\model} &= \identity_{q} ~,~ \bPsi_{\model} = \identity_{d} \\
    \bSig_{\bc_1} &= \identity_{d}\\
    \bSig_{\bc_2} &= \identity_{d}\\
    \bSig_{\bz} &= \identity_{q}\\
    \bSig_{\balpha} &= \identity_{d}
\end{align}

Since the inverse of the identity matrix is simply the identity matrix, we have:
\begin{align}
    \hat \model, \hat \bc_1, \hat \bc_2, \hat \bz = \arg\max_{\model, \bc_1, \bc_2, \bz} & \lambda \left[ \tr \left[ -\frac{1}{2} (\model-\bmean_{\model}) (\model - \bmean_{\model})^{\top} \right] \right. \\
    &+ -\frac{1}{2} (\bc_1-\bmean_{\bc_1})^{\top} (\bc_1 - \bmean_{\bc_1}) \\
    &+-\frac{1}{2} (\bc_2-\bmean_{\bc_2})^{\top} (\bc_2 - \bmean_{\bc_2}) \\
    &+ \left. -\frac{1}{2} (\bz-\bmean_{\bz})^{\top} (\bz - \bmean_{\bz}) \right] \\
    &+ -\frac{1}{2} (\balpha_{obs} - \balpha_{opt})^{\top} (\balpha_{obs} - \balpha_{opt})
\end{align}

Since $\frac{1}{2}$ is a constant, we can take it out of the objective.
\begin{align}
    \hat \model, \hat \bc_1, \hat \bc_2, \hat \bz = \arg\min_{\model, \bc_1, \bc_2, \bz} & \lambda \left[ \tr \left[ (\model-\bmean_{\model}) (\model - \bmean_{\model})^{\top} \right] \right. \\
    &+ (\bc_1-\bmean_{\bc_1})^{\top} (\bc_1 - \bmean_{\bc_1}) \\
    &+ (\bc_2-\bmean_{\bc_2})^{\top} (\bc_2 - \bmean_{\bc_2}) \\
    &+ \left. (\bz-\bmean_{\bz})^{\top} (\bz - \bmean_{\bz}) \right] \\
    &+ (\balpha_{obs} - \balpha_{opt})^{\top} (\balpha_{obs} - \balpha_{opt})
\end{align}

Note that for any matrix $\bA$,
\begin{equation}
    \Vert \bA \Vert_{F} = \sqrt{\tr (\bA^{\top}\bA)}
\end{equation}
\noindent where $\Vert\cdot\Vert_{F}$ is the Frobenius norm.

Also note that for any vector $\bv$,
\begin{equation}
    \Vert \bv \Vert_{2} = \sqrt{(\bv^{\top}\bv)}
\end{equation}
\noindent where $\Vert\cdot\Vert_{2}$ is the $\ell_2$ norm.

Hence, we have:
\begin{align}
    \label{eq:log_final_def_opt}
    \hat \model, \hat \bc_1, \hat \bc_2, \hat \bz &=  \arg\min_{\model, \bc_1, \bc_2, \bz} \lambda \left[~ \Vert \model-\bmean_{\model} \Vert_{F}^{2} \right. \\
    & \quad + \Vert \bc_1-\bmean_{\bc_1} \Vert_2^2 + \Vert \bc_2-\bmean_{\bc_2} \Vert_2^2 \\
    & \left. \quad + \Vert \bz-\bmean_{\bz}\Vert_2^2 ~\right] + \balpha_{obs} - \balpha_{opt} \Vert_2^2 \\
    \text{s.t.}~ \balpha_{opt} &= \arg\max_{\balpha} z_{y^{\dagger}}(\bx + \balpha) \\
    &\text{s.t.~} \bc_1 \preceq \balpha \preceq \bc_2
\end{align}

\section{Experimental Setup}

We conducted a feasibility study (Section 4) examining whether our framework successfully learns attacker parameters from observed attacks.

For each trial, we sampled true attacker parameters \((\knowledge^*, \capability^*, \objective^*)\) from a multivariate normal prior distribution centered at the prior attacker parameters \(\knowledge_0, \capability_0, \objective_0\). 
By doing so, we conducted experiments on varying distances between the true attacker $\attacker^*$ and \defender{}'s prior belief $\attacker_0$.
That is, we tested the effectiveness of our framework under varying correctness of \defender{}'s prior belief.

Using these sampled parameters \((\knowledge^*, \capability^*, \objective^*)\), we generated the optimal attack.
This attack was observed by \defender{}, who then applied our framework to recover the attacker's parameters \(\hat\knowledge, \hat\capability, \hat\objective)\).
We employed gradient-based optimization with the Adam optimizer (learning rate = 0.01) for 5,000 epochs, regularizing toward the prior with $\lambda=0.1$. 
We conducted 100 independent trials per parameterization. 
The Percent Error Reduction (\texttt{PER}) was recorded for each trial.

All experiments were implemented in PyTorch.

\end{document}